% Template for ICASSP-2016 paper; to be used with:
%          spconf.sty  - ICASSP/ICIP LaTeX style file, and
%          IEEEbib.bst - IEEE bibliography style file.
% --------------------------------------------------------------------------
\documentclass{article}
\usepackage{spconf,amsmath,graphicx,epstopdf,amssymb,algorithm,algorithmic,float}
\usepackage{balance,amsthm,booktabs}

\newcommand\mydots{\hbox to 1em{.\hss.\hss.}}
\newtheorem{lemma}{Lemma}
\newtheorem{theorem}{Theorem}

% Common commands and definitions

\usepackage{amsmath}
\usepackage{amssymb}
\usepackage{amsthm}
\usepackage{latexsym}
\usepackage{verbatim}
\usepackage{graphicx}

\newtheorem{prop}{Proposition}

%\newtheorem{remark}{Remark}

% \newtheorem{conjecture}{Conjecture}

% \newtheorem{problem}{Problem}

%    Densely printed itemized list,
{\begin{list}               %    with flush left bullets.
    {$\bullet$ \hfill}{
        \setlength{\leftmargin}{\parindent}
        \setlength{\parsep}{0.04\baselineskip}
        \setlength{\itemsep}{0.5\parsep}
        \setlength{\labelwidth}{\leftmargin}
        \setlength{\labelsep}{0em}}
    }
{\end{list}}

  % call \eqref from amstex
\providecommand{\cref}[1]{Chapter~\ref{chap:#1}}

\providecommand{\R}{\ensuremath{\mathbb{R}}}

\providecommand{\norm}[1]{\left\lVert#1\right\rVert}
\providecommand{\inprod}[1]{\left\langle#1\right\rangle}
\providecommand{\set}[1]{\left\{#1\right\}}

\providecommand{\bydef}{\overset{\text{def}}{=}}

\providecommand{\rank}{\mathop{\mathrm{rank}}}

\renewcommand{\vec}[1]{\ensuremath{\mathbf{#1}}}
\providecommand{\mat}[1]{\ensuremath{\mathbf{#1}}}

% Some calligraphic letters

% Common matrices and vectors
\providecommand{\mA}{\mat{A}} 
 \providecommand{\mD}{\mat{D}}

\providecommand{\mN}{\mat{N}}
 \providecommand{\mP}{\mat{P}} 
 \providecommand{\mR}{\mat{R}}

\providecommand{\vc}{\vec{c}}

\providecommand{\vm}{\vec{m}} \providecommand{\vn}{\vec{n}} 
 \providecommand{\vp}{\vec{p}}
\providecommand{\vq}{\vec{q}} \providecommand{\vr}{\vec{r}}
\providecommand{\vs}{\vec{s}}

\providecommand{\vu}{\vec{u}} 

\providecommand{\vx}{\vec{x}}

 \providecommand{\vv}{\vec{v}}

% symbols with wide hats

% symbols with wide tildas

% \renewcommand{\div}{\mathop{\mathbf{div}}}
% \providecommand{\grad}{\mathop{\text{\bf grad}}}

\algsetup{indent=1em}

\ninept   
% Example definitions.
% --------------------

% Title.
% ------
\title{Look, no Beacons! Optimal All-in-One EchoSLAM}

% Single address.
% ---------------
\name{Miranda Krekovi\'c$^{\,\dagger}$, Ivan Dokmani\'c$^{\,\ddagger}$, and Martin Vetterli$^{\,\dagger}$}
\address{\hspace{-4mm}
\begin{minipage}{.5\linewidth}
    \centering
    $^{\dagger}$
    School of Computer and Communication Sciences\\
    Ecole Polytechnique F\'ed\'erale de Lausanne (EPFL) \\CH-1015 Lausanne, Switzerland\\
    \{miranda.krekovic,martin.vetterli\}@epfl.ch
\end{minipage}%
\hspace{4mm}%
\begin{minipage}{.5\linewidth}
    \centering
    $^{\,\ddagger}$
    Institut Langevin\\
    CNRS, ESPCI Paris, PSL Research University\\
    1 rue Jussieu 75005 Paris, France\\
    ivan.dokmanic@espci.fr
\end{minipage}%
}

\newcommand\blfootnote[1]{%
	\begingroup
	\renewcommand\thefootnote{}\footnote{#1}%
	\addtocounter{footnote}{-1}%
	\endgroup
}
%
%
% Two addresses (uncomment and modify for two-address case).
% ----------------------------------------------------------
%\twoauthors
%  {A. Author-one, B. Author-two\sthanks{Thanks to XYZ agency for funding.}}
%	{School A-B\\
%	Department A-B\\
%	Address A-B}
%  {C. Author-three, D. Author-four\sthanks{The fourth author performed the work
%	while at ...}}
%	{School C-D\\
%	Department C-D\\
%	Address C-D}
%
\begin{document}
%\ninept
%
\maketitle
\begin{abstract}
We study the problem of simultaneously reconstructing a polygonal room and a
trajectory of a device equipped with a (nearly) collocated omnidirectional
source and receiver. The device measures arrival times of echoes of pulses
emitted by the source and picked up by the receiver. No prior knowledge about
the device's trajectory is required. Most existing approaches addressing this
problem assume multiple sources or receivers, or they assume that some of
these are static, serving as beacons. Unlike earlier approaches, we take into
account the measurement noise and various constraints
on the geometry by formulating the solution as a minimizer of a cost function
similar to \emph{stress} in multidimensional scaling. We study uniqueness
of the reconstruction from first-order echoes, and we
show that in addition to the usual invariance to rigid motions, new
ambiguities arise for important classes of rooms and trajectories. We support our
theoretical developments with a number of numerical experiments.
\end{abstract}

\begin{keywords}%
Room acoustics, collocated source and receiver, room geometry estimation, sound source localization, non-convex optimization.
\end{keywords}%
\blfootnote{This work was supported by the Swiss National Science Foundation grant number 20FP-1 151073, ``Inverse Problems regularized by Sparsity''. ID was funded by LABEX WIFI (Laboratory of Excellence within the French Program “Investments for the Future”) under references ANR-10-LABX-24 and ANR-10-IDEX-0001-02 PSL* and by Agence Nationale de la Recherche under reference ANR-13-JS09-0001-01.}

\section{Introduction}
\label{sec:intro}

Autonomous mobile localization, and in particular simultaneous localization and mapping (SLAM),
has been an active topic of research for a long time. Different flavors of SLAM involve different sensing
modalities (for example, visual \cite{davison, blosch}, range-only \cite{blanco, djugash}, and acoustic SLAM \cite{hu}).
Almost all setups are defined as follows: given a sequence of
sensor observations and robot controls, estimate the robot's trajectory and
some representation of the environment---a map. Sensor measurements serve to improve
noisy kinematics-based trajectory estimates. 

We are interested in a more general problem in which the robot's kinematics is
a priori completely unknown, and we only obtain certain acoustic measurements at a few robot's
locations inside a room. While we often refer to acoustic echoes, any
range-based measurements will do---for instance reflections of ultra-wideband
signals \cite{meissner}. We assume no preinstalled infrastructure in the room, and only a bare
minimum on the robot---a single omnidirectional source and a single
omnidirectional receiver. This is different from our previous work where we
assumed some knowledge about the robot's trajectory \cite{krekovic}.

Our goal is twofold: 1) argue that multipath propagation conveys essential
information about the room geometry even with this rudimentary setup, and 2)
demonstrate in theory and experiment that this problem can be efficiently
solved with only a few measurements.
 
Multipath measurements have been used previously to do SLAM \cite{hu, meissner, gentner, dokmanic2},
and estimate room geometries \cite{dokthesis, dokmanic,
antonacci}. Most prior works use source and sensor arrays to do the estimation
\cite{ribeiro}. It is also common to assume either a fixed source or a fixed
receiver, so that the echoes correspond to virtual beacons from which we get
range measurements. In contrast, we assume no beacons, and we use a single omnidirectional source
and receiver. The same setup has been used before in \cite{peng} where the authors propose a
2D room reconstruction method based on noiseless times of arrival of the
first-order echoes. To cope with the everpresent measurement noise, we
formulate the solution as an optimization. 
 
Our contributions are as follows. We first provide the solution from noiseless
measurements based on simple trigonometry. We then use insights
from this part to formulate a cost criterion similar to the
well-known \emph{stress} function \cite{takane}, but adapted to the case of
collocated sensing. We show how this cost function can be restated in a
bilinear form for which efficient global solvers exist\footnote{While there
is no theoretical guarantee of worst-case polynomial-time complexity, the
runtime was consistently very short---in the milliseconds}; hence, we propose an algorithm that
always results with the best joint estimate of a room geometry and the
estimates of the measurement locations with respect to the mean square error.

The material is organized as follows. Section 2 introduces the notation and the problem setup. In Section 3 we
discuss the uniqueness of the mapping between the first-order echoes and
convex polyhedral rooms. A deterministic solution from the noiseless TOA
measurements and an algorithm that computes a globally optimal
solution from noisy TOA measurements are presented in Section 4. In Section 5
we numerically validate the performance of the algorithm.

\section{Problem setup}
\label{sec:problem_setup}

Suppose that a mobile device carrying an omnidirectional source and an
omnidirectional receiver traverses a trajectory described by the waypoints
$\set{\vr_i \in \R^3 \  : \ 1 \leq i \leq N}$. At every location, the source
produces a pulse, and the receiver registers the echoes. We assume that the
receiver can observe the first-order echoes.

Sound propagation is then described by a family of room impulse responses
(RIRs) where each RIR is idealized as a train of Dirac delta impulses produced
by the real and image sources, and recorded by the microphone at position
$\mathbf{r}_{i}$, 

\begin{equation}
h_{i}(t)=\sum_{j\geq 0} a_{j} \phi(t-\tau_{i,j})
\end{equation}
where $a_{j} $ are the received magnitudes that depend on the wall absorption coefficients
and the distance of the real or image source from the microphone.

To model echoes, we use the image source (IS) model \cite{allen, borish}. We
replace reflections with signals produced by image
sources---mirror images of the real sources across the corresponding walls
(Fig. \ref{fig:room}). We can describe the $j$th wall by its outward-pointing
unit normal $\vn_j$ and by some (any) wall point $\vp_j$. The wall then lies in the plane

\begin{equation}
{\cal P}_j =  \set{\vx \ : \ \vn_j^T (\vx - \vp_j) = 0 }
\end{equation}

With reference to Fig. \ref{fig:room}, the corresponding first-order image
source $\widetilde{\mathbf{s}}_{j}$ is computed as $\widetilde{\vs}_{j}=\vr+2 \inprod{ \vp_{j}-\vr, \vn_{j} } \vn_{j}$.
The propagation time $\tau_{i,j}$, also known as the time of arrival (TOA), is
proportional to the distance between the microphone $\vr_{i}$ and the
source $\widetilde{\vs}_{i,j}$:
\begin{equation}
\tau_{i,j} = \frac{\| {\widetilde{\vs}_{i,j}-\vr_{i}} \|}{c},
\end{equation}
where $c$ is the speed of sound.

\begin{figure}
\begin{center}
    \includegraphics[width=.4\linewidth]{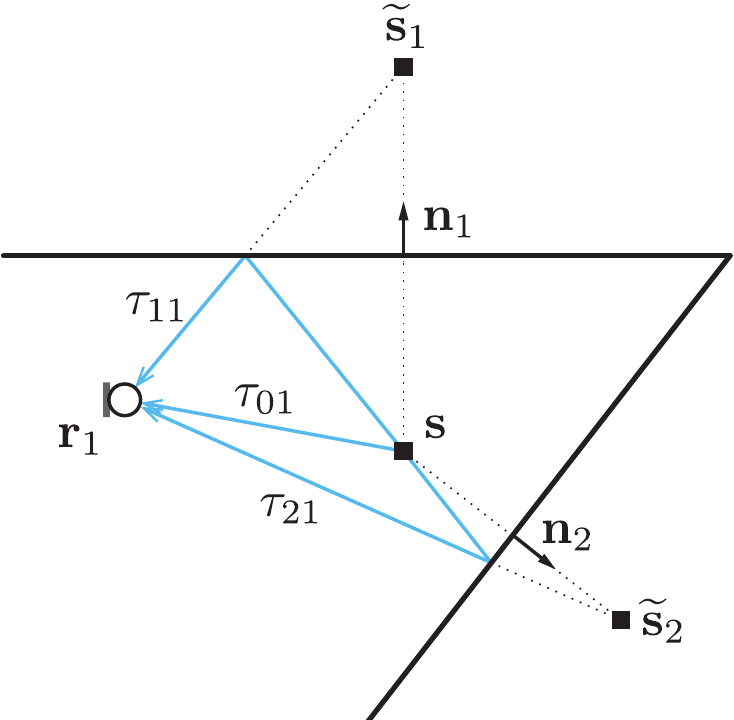}
    \vspace{-1em}
    \caption{Illustration of the image source model for first-order reflections in 2D.}
    \vspace{-1em}
    \label{fig:room}
\end{center}
\end{figure}

\begin{figure}
\begin{center}
    \includegraphics[width=7.6 cm]{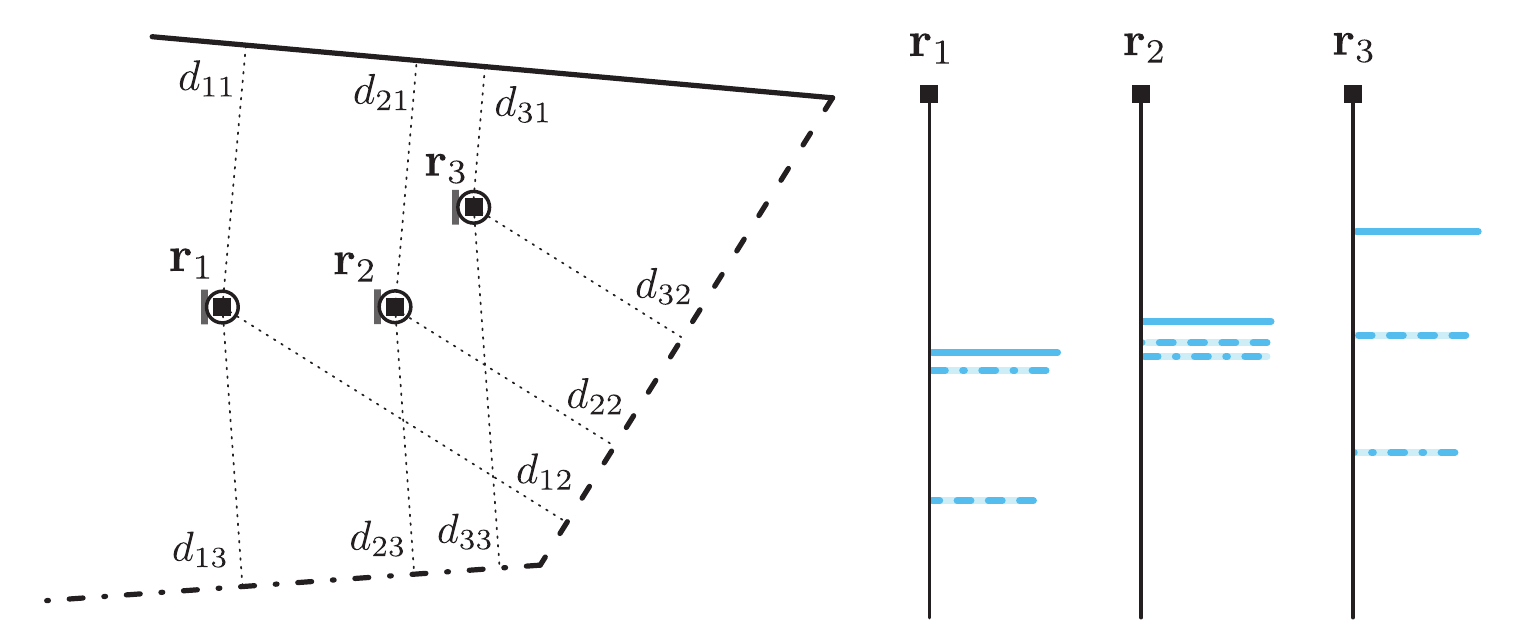}
    \vspace{-1em}
    \caption{\textit{Left}: Illustration of the first-order reflections in a 2D room for collocated microphone and source. \textit{Right}: Idealized room impulse responses consisting of the first-order pulses and recorded at $\vr_{1}$, $\vr_{2}$, and $\vr_{3}$. Corresponding walls and first-order echoes are marked same.}
    \vspace{-1em}    
    \label{fig:room_collocated}
\end{center}
\end{figure}

The case of collocated microphone and source is illustrated in Fig. \ref{fig:room_collocated}. A particularity of this setup is that the propagation times directly reveal the distances between the measurement locations and the walls, given by 
\begin{equation}
d_{i,j}=\frac{c \tau_{i,j}}{2}.
\end{equation}

In the following we assume that we have access to the first-order echoes and that the measurements of distance $d_{i,j}$ are obtained from their propagation times. Also, we assume to know correct labelling between each first-order echo and the corresponding wall.

\section{Characterization of rooms by first-order echoes}
\label{characterization}

We start by describing the solution with ideal, noiseless measurements. Recall
that we consider the room to be a convex polyhedron whose faces (reflectors)
are given in the Hessian normal form $\inprod{\vn_j, \vx } = q_j$,
where $q_j \geq 0$ is the distance of the plane from the origin, $q_j = \inprod{\vn_j, \vp_j}$.

\subsection{Linear dependence of propagation times}
\label{linear}

From the image source model we know that the distance$(\vr_{i}, {\cal P}_j)$ is
\begin{equation}
\label{eq:distdef}
d_{i,j} = \langle \vp_{j} - \vr_{i},  \vn_{j} \rangle
\end{equation}
for every measurement $i = 1, ...,N$ and wall $j = 1, ..., K$. We
choose $N$ so that $N \geq K$, and define $\mD  \in \mathbb{R}^ {N
\times K}$ to be the matrix with entries $d_{i,j}$. These times are not independent. In particular, we can state
the following result.

\begin{prop}
With $\mD$ defined as above, we have
\begin{equation}
\rank(\mD) \leq d + 1.
\end{equation}
\end{prop}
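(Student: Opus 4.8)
The plan is to write each entry of $\mD$ in closed form and recognize the resulting matrix as a sum of two pieces whose ranks are bounded by inspection. Starting from \eqref{eq:distdef} and using the Hessian normal form $\inprod{\vn_j, \vp_j} = q_j$, we get, for all $i,j$,
\[
d_{i,j} = \inprod{\vp_j - \vr_i,\, \vn_j} = q_j - \inprod{\vr_i,\, \vn_j}.
\]
The first summand depends only on the column index $j$; the second is bilinear in the waypoint $\vr_i$ and the wall normal $\vn_j$. This is exactly the structure that forces a low rank.

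To make this precise I would stack the data into matrices: let $\mR \in \R^{N \times d}$ have $i$th row $\vr_i^{T}$, let $\mN \in \R^{d \times K}$ have $j$th column $\vn_j$, let $\vq = (q_1, \dots, q_K)^{T}$, and let $\mathbf{1} \in \R^N$ be the all-ones vector. The entrywise identity above then reads
\[
\mD = \mathbf{1}\,\vq^{T} - \mR\,\mN .
\]
Now $\mathbf{1}\,\vq^{T}$ is an outer product, so $\rank(\mathbf{1}\,\vq^{T}) \le 1$, while $\mR\,\mN$ is a product of matrices with inner dimension $d$, so $\rank(\mR\,\mN) \le d$. Subadditivity of rank then gives $\rank(\mD) \le \rank(\mathbf{1}\,\vq^{T}) + \rank(\mR\,\mN) \le 1 + d$, which is the claim.

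There is essentially no obstacle here beyond spotting the decomposition; two small points are worth flagging. First, the ambient dimension enters only through the inner dimension of the product $\mR\,\mN$, so the bound is genuinely $d+1$ (hence $4$ in the three-dimensional setting of the paper), and it uses nothing about $N$, $K$, or convexity of the room---only that the walls are written in Hessian normal form. Second, one may phrase the argument geometrically: every row of $\mD$ equals $\vq^{T}$ minus a vector lying in the at most $d$-dimensional row space of $\mN$, so all $N$ rows of $\mD$ lie in a common affine subspace of dimension at most $d$ and therefore span a linear subspace of dimension at most $d+1$.
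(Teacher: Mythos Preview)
Your proof is correct and follows essentially the same approach as the paper: both write $\mD = \mathbf{1}\vq^{T} - \mR\mN$ (the paper uses $\mR^{T}\mN$ with columnwise $\mR$, which is the same object) and then apply the rank bound for a sum and for a product. The additional geometric remark you give is a nice restatement but not a different argument.
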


\begin{proof}
  Denoting $\mN = [\vn_1, \ldots, \vn_K]$, $\mP = [\vp_1, \ldots, \vp_K]$, and $\mR = [\vr_1, \ldots, \vr_N]$, we can write $\mD$ as

  \begin{equation}
    \mD = - \mR^T \mN + \vec{1} \vq^T.
  \end{equation}
  Because $\rank(\mR^T \mN)\leq d$ and
  $\rank(\vec{1} \vq^T) = 1$, the statement follows by the rank inequalities.
\end{proof}
\emph{Remark:} In most cases, $\rank(\mD)$ will be exactly $d+1$. It can be reduced only by special construction.

This property (or its approximate version in the noisy case) is useful for 1)
echo sorting, 2) in real situations when echoes will come in and out of existence,
and this property allows us to complete the matrix $\mD$ and estimate the
unobserved times.

\subsection{Uniqueness of first-order echoes}
\label{uniqueness}

At every one of $N$ locations we observe $K$ distance measurements defined in
\eqref{eq:distdef}. The unknowns that we are estimating are the wall
parameters---$\vp_{j}$ and $\vn_{j}$---and the measurement
locations, $\vr_{i}$. We are to recover $dK + dN$ unknown coordinates from $K N$ distance
measurements, where $d \in \set{2, 3}$ is the dimension of a space. It is
clear that translated and rotated version of the system will yield exactly the
same measurement, so we can fix these $d(d+1)/2$ degrees of freedom.
Requiring that $K N \geq d K + d N - d(d + 1) / 2$, we get a necessary condition for
the triples $(d, K, N)$ for which we can reveal the unknown
coordinates. Few examples of triplets are given in the Table \ref{table1}.

\begin{table}[H]
\centering
\begin{tabular}{@{} c c c c c c c c c @{} }
\toprule
 $d$ & &  & \bf 2 &  & \! &  & \bf 3 &  \\ 
 \midrule
 $K$ & & 3 & 4 & 5 & \! & 4 & 5 & 6 \\ 
 $N$ & & 3 & 3 & 3 & \! & 6 & 5 & 4 \\
\bottomrule
\end{tabular}
 \caption{Values of the smallest triplets $(d, K, N)$ sufficient for the room and the trajectory reconstruction.}
 \label{table1}
\end{table}

However, in this case counting the degrees of freedom is slightly misleading.
It turns out that it is impossible to always uniquely reconstruct the room and the
trajectory even when we fix the global translation and rotation, regardless of the number of measurements. There is
another invariance for a very important class of rooms---rectangular, which does not affect the distances
between the measurement locations and walls. Moreover, ambiguities arise for an important class of trajectories too---linear, as shown by the following lemma.

\begin{lemma} Two rooms with unit wall normals $\vn_{j}$ and $\vm_{j}$ generate identical measurements with trajectories $\vr_{i}$ and $\vs_{i}$ if and only if:
\[
\begin{bmatrix}
\vr_{1}^T  & -\vs_{1}^T    \\
\vr_{2}^T & -\vs_{2}^T  \\
 \vdots & \vdots  \\
\vr_{N}^T  & -\vs_{N}^T \\
\end{bmatrix} 
\begin{bmatrix}  
\vn_{1} & \vn_{2} & \hdots & \vn_{K} \\
\vm_{1} & \vm_{2} & \hdots & \vm_{K} 
\end{bmatrix} = \mathbf{0} \] 
\label{lemma1}
\end{lemma}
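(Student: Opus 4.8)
The plan is to reduce the equivalence to the matrix factorization of $\mD$ already used in the Proposition, after which the statement becomes an elementary rearrangement. First I would write, for the first room, $q_j = \langle \vn_j, \vp_j \rangle$ and expand \eref{distdef} to obtain $\mD = -\mR^T \mN + \vec{1}\vq^T$ with $\mN = [\vn_1, \ldots, \vn_K]$ and $\mR = [\vr_1, \ldots, \vr_N]$, and likewise for the second (hypothetical) room with normals $\vm_j$, offsets $q'_j$, and trajectory $\vs_i$: $\mD' = -\mS^T \mM + \vec{1}(\vq')^T$, where $\mM = [\vm_1, \ldots, \vm_K]$ and $\mS = [\vs_1, \ldots, \vs_N]$. ``Generating identical measurements'' then means exactly $\mD = \mD'$.

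The key observation is purely formal: the product appearing in the lemma, pairing the block rows $[\vr_i^T \mid -\vs_i^T]$ with the block columns $[\vn_j^T \mid \vm_j^T]^T$, has $(i,j)$ entry $\vr_i^T \vn_j - \vs_i^T \vm_j$, so it is precisely $\mR^T \mN - \mS^T \mM$. On the other hand, rearranging $\mD = \mD'$ gives $\mR^T \mN - \mS^T \mM = \vec{1}(\vq - \vq')^T$. Hence the lemma's equation $\mathbf{0}$ on the right is equivalent to ``$\mD = \mD'$ \emph{and} $\vq = \vq'$'', because the rank-one matrix $\vec{1}(\vq - \vq')^T$ vanishes iff $\vq = \vq'$.

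The one thing that needs care — and the only real obstacle — is arguing that appending ``$\vq = \vq'$'' to ``$\mD = \mD'$'' costs nothing, so the two conditions collapse to the single one in the lemma. This is exactly where the gauge-fixing invoked in the text enters: translating a room together with its trajectory leaves every $d_{i,j}$ unchanged, so we may normalize $\vr_1 = \vs_1 = \vec{0}$; then $q_j = d_{1,j}$ and $q'_j = d'_{1,j}$ are simply the first rows of $\mD$ and $\mD'$, so $\mD = \mD'$ already forces $\vq = \vq'$ — and in the converse direction, given any $\vm_j$, $\vs_i$ satisfying the matrix equation one takes the second room's offsets to be the matching ones, $\vq' = \vq$. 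Under this normalization the equivalence ``$\mD = \mD' \iff \mR^T \mN = \mS^T \mM$'' holds with nothing to spare, which yields both directions of the lemma at once.
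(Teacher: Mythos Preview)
Your argument is correct and, in fact, cleaner than the one the paper gives. Both proofs start from the same identity $d_{i,j}=q_j-\vn_j^T\vr_i$ and both recognize that the block product in the lemma is $\mR^T\mN-\mS^T\mM$. Where they diverge is in handling the offset discrepancy $w_j\bydef q_j-q_j'$. The paper keeps the $w_j$ around, writes $w_j=\vv_j^T\vn_j$ for some auxiliary $\vv_j$, builds the enlarged system $\mR\mN=\vec{0}$ with shifted rows $\vr_i-\vv_j$, and then \emph{defers} to a forthcoming paper the verification that this enlarged system has the same nullspace characterization as the simple one $\mR_0\mN_0=\vec{0}$. You instead kill the $w_j$ in one stroke by exercising the translation gauge: after setting $\vr_1=\vs_1=\vec{0}$, the first row of $\mD=\mD'$ forces $q_j=q_j'$ directly, so the general case \emph{is} the special case.

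What each approach buys: the paper's route is gauge-agnostic in that it never fixes a frame, but pays for this with auxiliary variables and an omitted reduction step. Your route uses the freedom the paper itself invokes elsewhere (``we can fix these $d(d+1)/2$ degrees of freedom'') and yields a self-contained two-line proof of both directions. The only thing worth making explicit is that the lemma, as stated, silently quantifies over the wall offsets (the matrix equation does not mention $q_j,q_j'$); your converse handles this correctly by taking $\vq'=\vq$, which matches the paper's remark that ``we can always find a solution of such linear equation.''
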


\begin{proof}
Let us assume that we have two sets of parameters that describe rooms and measurement locations, $\mathbf{\Phi}$ and  $\mathbf{\Phi}'$, which result in the same set of distance measurements. Sets consist of wall normals, wall points, and measurement locations: $\mathbf{\Phi}$ = $\big\{$ $\vn_j$, $\hdots$ $\vn_K$, $\vp_1$, $\hdots$, $\vp_K$, $\vr_1$, $\hdots$, $\vr_N$ $\big\}$ and $\mathbf{\Phi}'$ = $\big\{$ $\vm_j$, $\hdots$ $\vm_K$, $\vq_1$, $\hdots$, $\vq_K$, $\vs_1$, $\hdots$, $\vs_N$ $\big\}$. From the assumption of equal distance measurements, $d_{i,j}  = d'_{i,j} $, we obtain using \eqref{eq:distdef}
\begin{equation}
\big< \vp_j, \vn_j \big> - \big< \vr_i, \vn_j \big>  = \big< \vq_j, \vm_j \big> - \big< \vs_i, \vm_j \big> \quad \forall i, j.
\label{eq:main}
\end{equation}
A specific case of choosing $\vn_j, \vr_i, \vm_j, \vs_i$ such that
\begin{equation}
\big< \vr_i, \vn_j \big> =  \big< \vs_i, \vm_j \big>
\label{eq:main_simple}
\end{equation} 
implies that we need to find $\vp_j$ and $\vq_j$ such that
\begin{equation}
\big< \vp_j, \vn_j \big> = \big< \vq_j, \vm_j \big>.
\label{eq:side_simple}
\end{equation}
As for each wall $j$ we obtain one equation \eqref{eq:side_simple} with $2d$ variables that are independent across the walls, we can always find a solution of such linear equation. Therefore, we focus on analyzing the solutions of Eq. \eqref{eq:main_simple}. In a matrix form we have
\begin{equation}
\mathbf{R_0} \mathbf{N_0} = \mathbf{0},
\label{eq:rono}
\end{equation}
\[ \mathbf{R_0} =
\begin{bmatrix}
\vr_1 & \vr_2 & \hdots & \vr_N  \\
-\vs_1 & -\vs_2 & \hdots & -\vs_N  \\
\end{bmatrix} ^T, \] \[
\mathbf{N_0} = \begin{bmatrix}  
\vn_{1} & \vn_{2} & \hdots & \vn_{K} \\
\vm_{1} & \vm_{2} & \hdots & \vm_{K}  
\end{bmatrix}. \]
The solution exists when the columns of $\mathbf{N_0}$ are in the nullspace of $\mathbf{R_0}$ and the rows of $\mathbf{R_0}$ are in the nullspace of $\mathbf{N}_\mathbf{0}^T$.

In the general case when $ \big< \vr_i, \vn_j \big> \neq  \big< \vs_i , \vm_j \big>$ we denote the difference $\big< \vp_j, \vn_j \big>  -  \big< \vq_j, \vm_j \big> = \big< \vr_i, \vn_j \big> - \big< \vs_i , \vm_j \big> = w_{i,j}$ and obtain the equations:
\begin{align}
\big< \vr_i, \vn_j \big> = \big< \vs_i , \vm_j \big> + w_{i,j}  \label{eq:proof1_1} \\
\big< \vp_j, \vn_j \big> =  \big< \vq_j , \vm_j \big> + w_{i,j} \label{eq:proof1_2}
\end{align}
We notice that all variables in Eq. \eqref{eq:proof1_2} depend only on the wall index, so we require the same for a new variable, $w_{i,j} = w_{k,j} = w_j$, $\forall i, k$. Further, we can write any real number as an inner product of some vectors. We define $\mathbf{v}_j \in \mathbb{R}^2$, such that $w_j = \vv^T_j \vn_j$. Then, from Eq. \eqref{eq:proof1_1} we obtain $\big< \vr_i - \vv_j, \vn_j \big> = \big< \vs_i, \vm_j \big>$ that can be written in a matrix form as
\begin{equation}
\mathbf{R} \mathbf{N} = \mathbf{0},
\label{eq:rn}
\end{equation}
\[ \mathbf{R} =
\begingroup
\renewcommand*{\arraystretch}{1}
\begin{bmatrix}
\vr_{1} - \vv_{1} & \hspace{-2px} \mydots & \hspace{-2px} \vr_{1} - \vv_{K} & \hspace{-2px} \mydots & \hspace{-2px} \vr_{N} - \vv_{1} & \hspace{-2px} \mydots & \hspace{-2px} \vr_{N} - \vv_{K}  \\
-\vs_{1} & \hspace{-2px} \mydots & \hspace{-2px} -\vs_{1}  & \hspace{-2px} \mydots & \hspace{-2px} -\vs_{N} & \hspace{-2px} \mydots & \hspace{-2px} -\vs_{N} 
\end{bmatrix} \endgroup
, \]
\[\mathbf{N} = \begin{bmatrix}  
\vn_{1} & \vn_{2} & \hdots & \vn_{K} & \hdots & \vn_{1} & \vn_{2} & \hdots & \vn_{K} \\
\vm_{1} & \vm_{2} & \hdots & \vm_{K}  & \hdots & \vm_{1} & \vm_{2} & \hdots & \vm_{K}  
\end{bmatrix}. \]
Analogously to Eq. \eqref{eq:rono}, this implies that the rows of $\mathbf{R}$ must be in a nullspace of $\mathbf{N}^T$. The matrix $\mathbf{N}$ is constructed by $N$ times concatenating $2d \times K$ matrices $\mathbf{N_0}$. To find a solution of Eq. (\ref{eq:main}), we study the nullspace of the matrix $\mathbf{N}^T$ ($\mathbf{N}_\mathbf{0}^T$) and find the vectors $\vr_i$,  $\vv_j$ and $\vs_i$ that live in the nullspace. Generically for $K \geq 4$, the nullspace is empty. For it to be nonempty, we must explicitly assume different linear dependencies among the columns or rows. The detailed analysis is omitted due to space limit and is deferred to in our forthcoming publication \cite{krekovic_journal}. In the analysis we also show that the Eq. \eqref{eq:rono} gives the same characterization of the uniqueness property as the general case, Eq. \eqref{eq:rn}. In other words, $\mathbf{R_0} \mathbf{N_0} = \mathbf{0}$ covers all possible combinations of rooms and measurement locations that result in the same set of distance measurements.
\end{proof}
Remarkable is a consequence of Lemma \ref{lemma1}, stated in Theorem \ref{thm:third}.
\begin{theorem}
Regardless of the number of measurements, same set of the first-order echoes in two rooms can occur if and only if the room is a parallelogram or measurement locations are collinear.
\label{thm:third}
\end{theorem}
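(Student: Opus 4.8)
The plan is to read the ``if and only if'' off Lemma~\ref{lemma1} by a rank count, the one non-obvious input being that a bounded convex polygon forces its outward wall normals to positively span the plane (otherwise it would be unbounded in some direction); in particular the $2\times K$ matrices $[\vn_1,\dots,\vn_K]$ and $[\vm_1,\dots,\vm_K]$ have rank $2$. I work in dimension $d=2$, the relevant case for ``parallelogram''. Since distances are translation invariant I first normalize $\vr_N=\vs_N=\mathbf 0$, which preserves the identity $\mathbf{R_0}\mathbf{N_0}=\mathbf 0$ of Lemma~\ref{lemma1} because it is just the row operation ``subtract row $N$'' applied to $\mathbf{R_0}$. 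The top two rows of $\mathbf{N_0}$ are $[\vn_1,\dots,\vn_K]$, so $\rank(\mathbf{N_0})\geq 2$, and $\mathbf{R_0}\mathbf{N_0}=\mathbf 0$ then forces $\rank(\mathbf{R_0})\leq 2$; in particular $\rank[\vr_1,\dots,\vr_N]\leq 2$.

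For the forward direction, assume a genuinely distinct (non-congruent) second room and trajectory with the same echoes, so $\mathbf{R_0}\mathbf{N_0}=\mathbf 0$. If $\rank[\vr_1,\dots,\vr_N]\leq 1$, the $\vr_i$ (which include $\mathbf 0$) lie on a line through the origin and we are in the second alternative. Otherwise the $\vr_i$ span $\R^2$; then $\rank(\mathbf{R_0})\leq 2$ forces the $\vs_i$-columns of $\mathbf{R_0}$ into the span of the $\vr_i$-columns, i.e.\ $\vs_i=T\vr_i$ for a fixed $2\times2$ matrix $T$. Substituting into $\inprod{\vr_i,\vn_j}=\inprod{\vs_i,\vm_j}=\inprod{\vr_i,T^{T}\vm_j}$ and using that the $\vr_i$ span gives $\vn_j=T^{T}\vm_j$ for all $j$; since the $\vn_j$ span, $T^{T}$ (hence $T$) is invertible, so $\vm_j=G\vn_j$ with $G:=(T^{T})^{-1}$. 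The unit-norm conditions $\norm{\vm_j}=\norm{\vn_j}=1$ then read $\vn_j^{T}\mathbf S\,\vn_j=0$ for all $j$, where $\mathbf S:=G^{T}G-\mI$ is symmetric. If $\mathbf S=\mathbf 0$ then $G\in O(2)$ and $T=G$, so the whole scene maps to the reconstructed one under the single isometry $G$ — congruent, contradiction. If $\mathbf S\neq\mathbf 0$, then $\vn\mapsto\vn^{T}\mathbf S\,\vn$ is a nonzero real quadratic form on $\R^2$, whose zero locus on the unit circle is one or two antipodal pairs of directions (or empty); as the $K\geq3$ normals $\vn_j$ all lie in this locus and positively span $\R^2$ (so cannot all sit in one pair), the locus must be exactly $\{\pm\va,\pm\vb\}$ with $\va\neq\pm\vb$, every $\vn_j\in\{\pm\va,\pm\vb\}$, and boundedness forces all four to occur: the room is a parallelogram.

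For the converse I would exhibit the ambiguities explicitly and invoke Lemma~\ref{lemma1}. If the room is a parallelogram with normals $\pm\va,\pm\vb$ and Hessian data $q_j$, then for any unit $\va',\vb'$ with $\va'\neq\pm\vb'$ put $\vm_1=\va',\vm_2=\vb',\vm_3=-\va',\vm_4=-\vb'$, solve the two independent equations $\inprod{\vs_i,\va'}=\inprod{\vr_i,\va}$, $\inprod{\vs_i,\vb'}=\inprod{\vr_i,\vb}$ for $\vs_i$, and take $\vq_j=q_j\vm_j$; one checks all distances match (so $\mathbf{R_0}\mathbf{N_0}=\mathbf 0$) and the new room is a parallelogram of generically different shape — a one-parameter family of non-congruent solutions that survives however large $N$ is. If instead the waypoints are collinear, $\vr_i=c_i\vu$, pick a unit $\vu'\neq\pm\vu$ and, for each $j$, a unit $\vm_j$ with $\inprod{\vu',\vm_j}=\inprod{\vu,\vn_j}$ (possible since $|\inprod{\vu,\vn_j}|\leq1$); with $\vs_i=c_i\vu'$ and $\vq_j=q_j\vm_j$ the matrix identity again holds, and flipping the sign of $\vm_j$ at one $\vn_{j_0}\not\parallel\vu'$ relative to the rotation $\vm_j=R\vn_j$ (where $R\vu=\vu'$) gives a room not isometric to the original.

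The rank count and the two constructions I expect to be routine. The real work — the step I would single out as the main obstacle — is the dichotomy in the forward direction: showing that $K\geq3$ unit normals that positively span the plane admit a nonzero symmetric $\mathbf S$ with $\vn_j^{T}\mathbf S\,\vn_j=0$ only when they collapse onto two lines, and verifying that the parallelogram and collinear families above are genuinely non-congruent. One should also keep in mind the caveat, already flagged in the proof of Lemma~\ref{lemma1} and deferred to \cite{krekovic_journal}, that $\mathbf{R_0}\mathbf{N_0}=\mathbf 0$ really does capture every ambiguity once the per-wall reparametrization by the $\vv_j$ is accounted for.
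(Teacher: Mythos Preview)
Your argument is sound and, in fact, more explicit than what the paper itself supplies: the paper's proof of Theorem~\ref{thm:third} is a single line deferring to ``the omitted analysis'' inside Lemma~\ref{lemma1}, which is in turn deferred to \cite{krekovic_journal}. Both routes start from the matrix identity $\mathbf{R_0}\mathbf{N_0}=\mathbf{0}$, but where the paper signals that one should catalogue the linear dependencies that make the left nullspace of $\mathbf{N_0}$ nontrivial, you instead bound $\rank(\mathbf{R_0})\leq 2$, extract a single linear map $T$ with $\vs_i=T\vr_i$, and reduce everything to the zero set of the quadratic form $\vn\mapsto\vn^{T}(G^{T}G-\mI)\vn$ on the unit circle. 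That buys you a clean dichotomy --- orthogonal $G$ forces congruence; otherwise the normals live in at most two antipodal pairs, hence a parallelogram --- without any case enumeration on $K$. One bonus worth stating more sharply: your translation normalization $\vr_N=\vs_N=\mathbf 0$ does more than ``preserve'' Lemma~\ref{lemma1}'s identity via a row operation; subtracting the $i=N$ instance of \eqref{eq:main} from the general one gives $\inprod{\vr_i-\vr_N,\vn_j}=\inprod{\vs_i-\vs_N,\vm_j}$ directly, so $\mathbf{R_0}\mathbf{N_0}=\mathbf 0$ holds in the translated coordinates \emph{without} invoking the deferred $\vv_j$-analysis at all. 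The residual care you flag is the right one: in the converse constructions you should check that the chosen $\vm_j$ still positively span $\R^2$ (so the second ``room'' is bounded) and that the new trajectory lies inside it; both hold for generic choices but deserve a sentence each.
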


\noindent Profs are straightforward from the omitted analysis.

\section{Room geometry estimation and measurement localization}

\subsection{Noiseless measurements}

The room geometry and the measurement locations can be jointly recovered from
only a few noiseless distance measurements. A benefit of the idealized
measurements is to simply illustrate how the echoes reveal essential geometric
information about the sources--channel--receivers system. Moreover, the same
approach to the solution can be used when the noise is small, while the
globally optimal solution is introduced in Section \ref{minimization}.

In order to fix some degrees of freedom we choose 4 scalars required to
specify the translation and the rotation, $\vr_{1} = [0 \text{ } 0]^{T}$ and $\vn_{1} = [0 \text{ } 1]^{T}$. This results in new
identities $\langle \vp_{j}, \vn_{j} \rangle = d_{1,j}$ and
$r_{y,i} = d_{1,1} - d_{i,1}$, and simplifies the initial
formulation of the distance measurements \eqref{eq:distdef}. The system of
equations from which we jointly reveal $\vn_{j}$ and $\vr_{i} $ for all $i = 1, \ldots, N$ and $j = 1, \ldots, K$
thus consists of $(N-1)(K-1)$ equations of the form

\begin{equation}
\label{equations}
{r}_{x,i} {n}_{x,j} + (d_{1,1} - d_{i,1})  {n}_{y,j} = d_{1,j} - d_{i,j}.
\end{equation}

\subsection{Noisy measurements}
\label{minimization}

In reality we get noisy distance measurements and solving polynomial equations
might be problematic. Additionally, this algebraic approach makes it
challenging to incorporate any prior knowledge we might have about the room or
the trajectory. It is easy to imagine scenarios where \emph{some} inertial
information is available, and the above approach provides no simple way to integrate it. 

To address these shortcomings, we formulate the joint recovery as an
optimization problem. Noisy measurements are given as

\begin{equation}
 \widetilde{d}_{i,j} = d_{i,j} + \epsilon_{i,j} = \inprod{\vp_j - \vr_i, \vn_j} + \epsilon_{i,j},
\end{equation}
where $\epsilon_{i,j}$ is noise. It is natural to seek the best estimate of the unknown vectors by solving
\begin{align}
\label{eq:cost_function}
& \underset{
\begin{subarray}
\text{ \text{  } \text{  } } q_{j}, \vn_{j}, \vr_{i} \\ i \leq N, j \leq K 
\end{subarray}} {\text{minimize}}
& & \sum_{i = 1}^{N} \sum_{j=1}^{K} (\widetilde{d}_{i,j} - q_{j} + \vn_{j}^{T} \vr_{i})^2 \nonumber \\
& \text{subject to}
& & \norm{\vn_{j}} = 1,\ j = 1, ..., K.
\end{align}

We make two remarks. First, this cost function is analogous to the
familiar \emph{stress} \cite{takane} function common in multidimensional
scaling. Second, if $\epsilon_{i,j}$ are iid Gaussian random variables, then
solving \eqref{eq:cost_function} gives us the maximum likelihood estimates of
the trajectory and the room geometry.

The cost function is not covex and minimizing it is difficult due to many local minima.
However, different search methods have been developed that guarantee global
convergence in algorithms for nonlinear programming (NLP). We mention two
approaches: global optimization for bilinear programming and
interior-point filter line-search algorithm for large-scale nonlinear
programming (IPOPT).

The first approach is based on a formulation of our cost function as a bilinear program
(BLP). The bilinearization is achieved as rewriting the problem so
that all the nonconvexities are due to bilinear terms in the objective
function or the constraints. 

By introducing a new variable $u_{i,j} \bydef \vn_j^T \vr_i$, and letting
$\vu_{i,j} = [q_j, \ u_{i,j}]^T$, we can rewrite the cost function as
\begin{align*}
 &\text{minimize} &&\sum_{i = 1}^{N} \sum_{j=1}^{K} \ \vu_{i,j}^T \mA \vu_{i,j} - 2 d_{i,j} \vc^T \vu_{i,j} - d_{i,j}^2 \\
 &\text{subject to} &&\vn_j^T \vn_j = 1, j = 1, ..., K \\
 &&& u_{i,j} = \vn_j^T \vr_i, \ \vu_{i,j} = [q_j,\ u_{i,j}]^T,
\end{align*}
with 

\begin{equation}
  \mA = 
  \begin{bmatrix}
  1 & -1 \\ -1 & 1
  \end{bmatrix}
  \ , \ 
  \vc = 
  \begin{bmatrix}
  -1 \\ 1
  \end{bmatrix},
\end{equation}
resulting in a bilinear cost function subject to bilinear constraints. Although such formulation also belongs to the class of nonconvex nonlinear
programming problems with multiple local optima, (empirically) efficient strategies for its global minimization exist.
A linear programming (LP) relaxations suggested by McCormick \cite{cormick} are the most widely used techniques for
obtaining lower bounds for a factorable nonconvex program. In the numerical
experiments for this paper, instead of using our home-brewed implementation of
McCormick's method, we use the significantly faster line search filter method (IPOPT) proposed in
\cite{wachter1} and implemented in an open-source package as a part of ​COIN-OR Initiative \cite{coinor}. 
Further improvement of computational efficiency is the focus of an ongoing research. By resorting to IPOPT we also get a guarantee on global convergence
under appropriate (mild) assumptions. Although we do not yet have a formal proof that our problem satisfies
the required assumptions, exhaustive
numerical simulations suggest that the method efficiently finds an optimal
solution in all our test cases and demonstrates favorable performance compared
to other NLP solvers. Additional advantage of using this method is that we can add various
constraints without introducing auxiliary variables and increasing the size of
a model. 

\section{Experimental results}
\label{results}

For numerical simulations we used interface IPOPT through the optimization software APMonitor Modeling Language.

Figures \ref{fig:linear_nonunique} and \ref{fig:paral_nonunique} illustrate Theorem \ref{thm:third} 
and exhibit non-unique reconstructions of room geometries from the first-order echoes.
Room impulse responses recorded at the locations marked with the same color and number in different rooms are equivalent.
Corresponding echoes and walls are indicated by the same line pattern. The left sides of Figures \ref{fig:noise_room} and \ref{fig:noise_loc} respectively depict the reconstruction of the room geometry and measurement locations for a fixed number of measurements ($N = 8$) and Gaussian noise, $\mathcal{N}(0, 0.05^2)$. The right sides show the dependence of the estimation error on the standard deviation of the noise. This standard deviation was increasing from 0 to 0.15 with steps 0.005, and for each value we performed 1000 experiments. The average SNR is plotted above the graph.

\begin{figure*}[t]
 \vspace{-1em}
  \centering
  \centerline{\includegraphics[width=\textwidth]{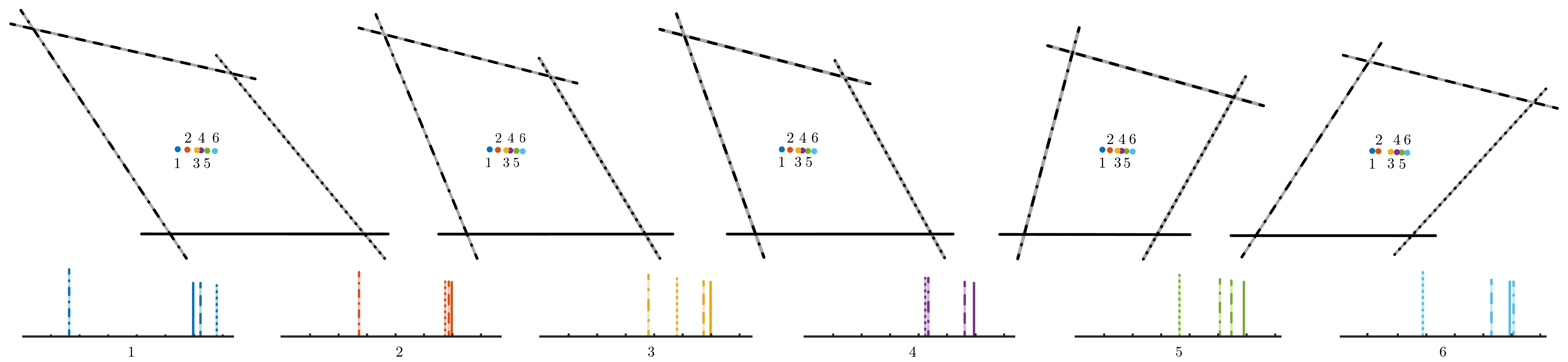}}
   \vspace{-1em}
   \caption{Example of rooms and collinear measurement locations that result in the same set of first-order echoes.}
   \label{fig:linear_nonunique}
\end{figure*}

\begin{figure*}[t]
  \centering
  \centerline{\includegraphics[width=\textwidth]{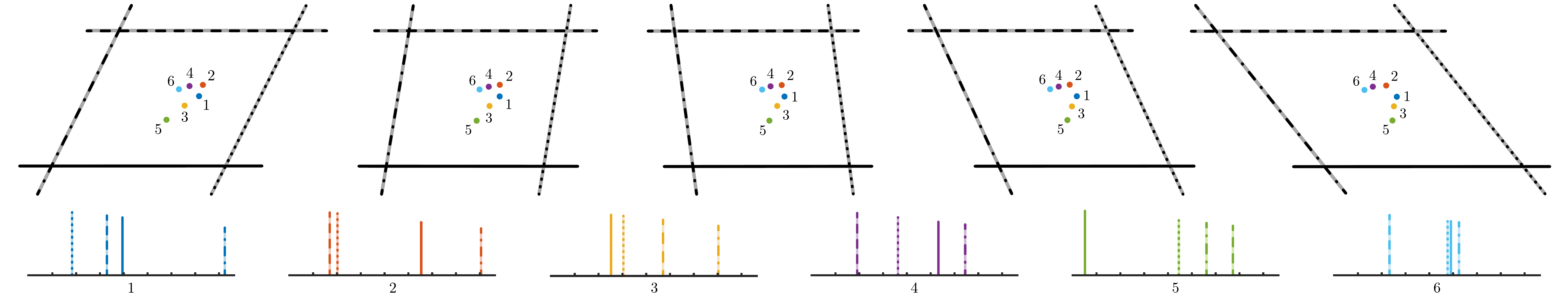}}
   \vspace{-1em}
   \caption{Example of parallelogram rooms and measurement locations that result in the same set of first-order echoes.}
  \label{fig:paral_nonunique}
\end{figure*}

 \begin{figure}[H]
 \vspace{-1em}
 \begin{minipage}[h]{0.4\linewidth}
  \centering
    \centerline{\includegraphics[width=3.3cm]{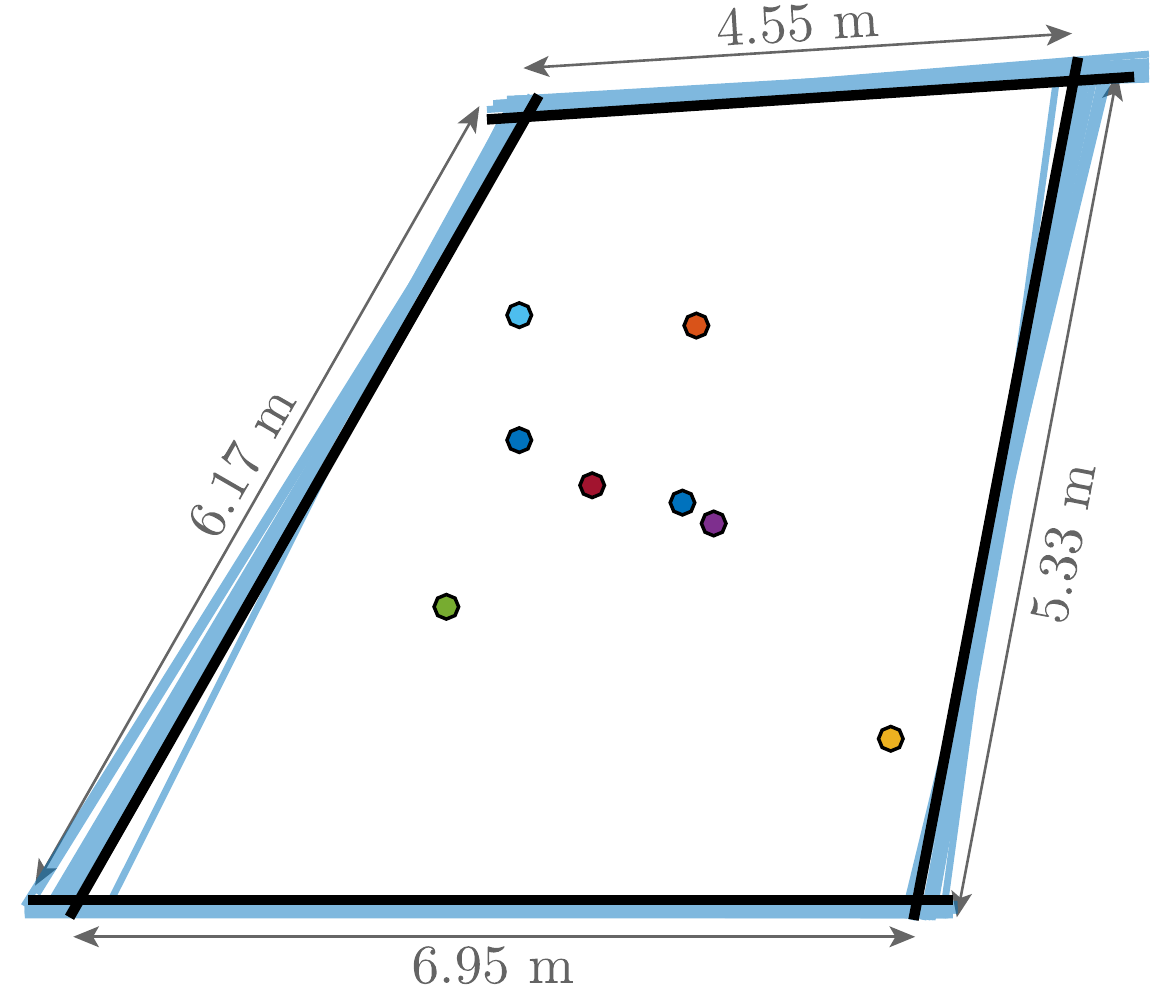}}
 \end{minipage}
 \begin{minipage}[h]{0.6\linewidth}
  \centering
  \centerline{\includegraphics[width=4.2cm]{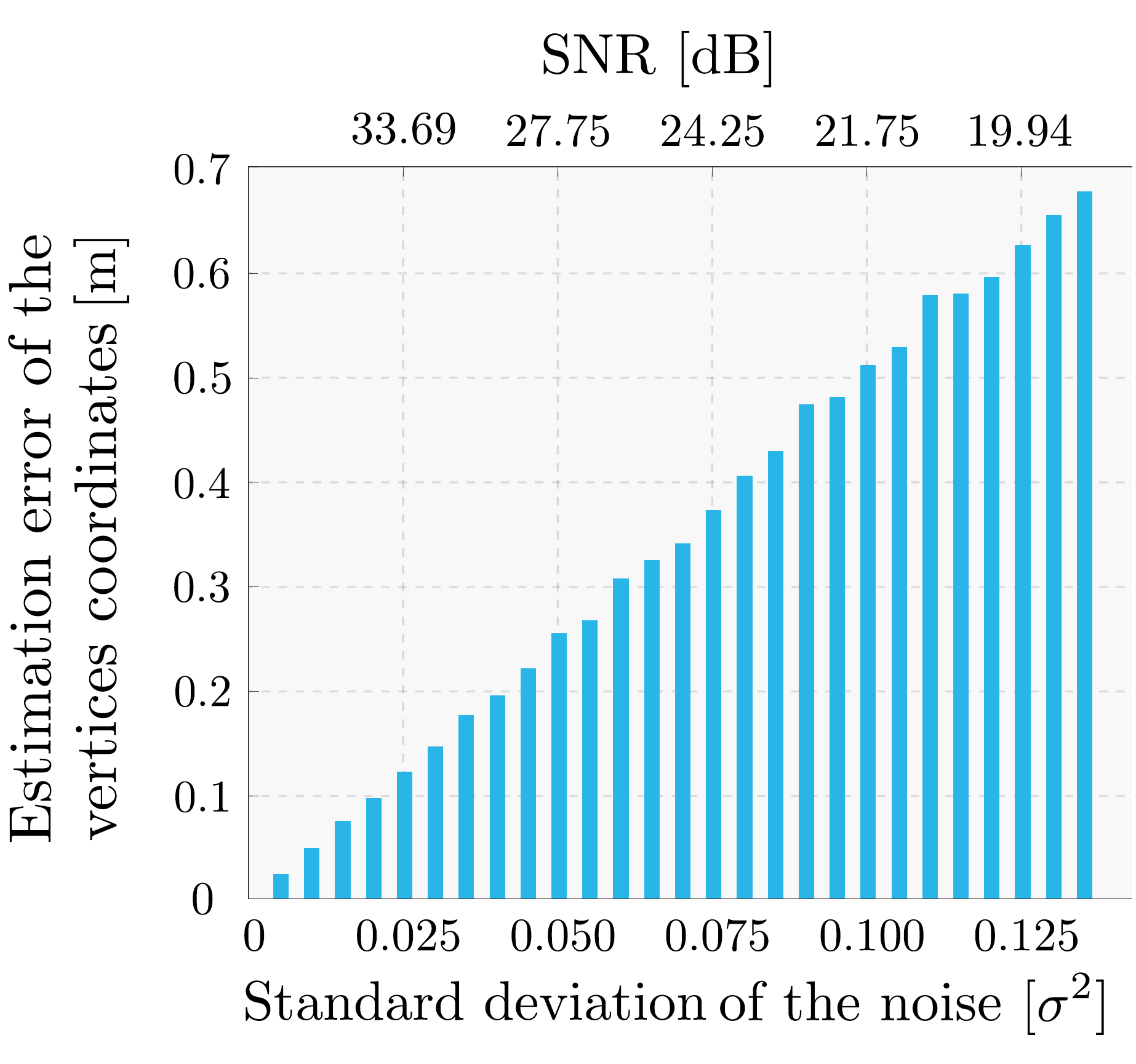}}
 \end{minipage}
  \vspace{-1em}
 \caption{\textit{Left:} Room reconstructions of 10 experiments with measurement noise $\epsilon_{i,j} \sim \mathcal{N}(0, 0.05^2)$ are illustrated in blue, while the original room is colored black.  \textit{Right:} Dependence of the estimation error on the noise. Estimation error is defined as the Euclidean distance between the vertices of the original and the reconstructed room.}
 \vspace{-1em}
 \label{fig:noise_room}
\end{figure}

 \begin{figure}[H]
 \vspace{-1em}
 \begin{minipage}[h]{0.4\linewidth}
  \centering
   \centerline{\includegraphics[width=3cm]{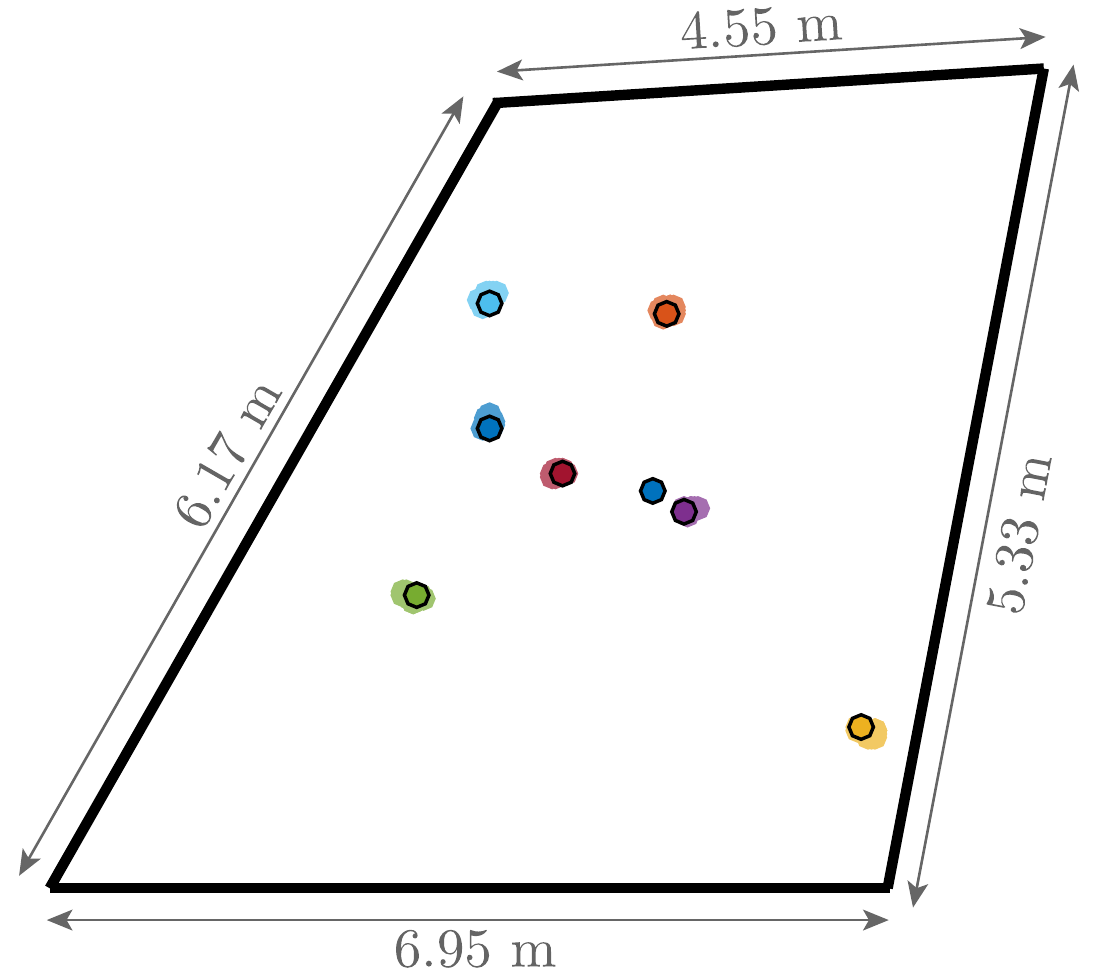}}
 \end{minipage}
 \begin{minipage}[h]{0.6\linewidth}
  \centering
  \centerline{\includegraphics[width=4.2cm]{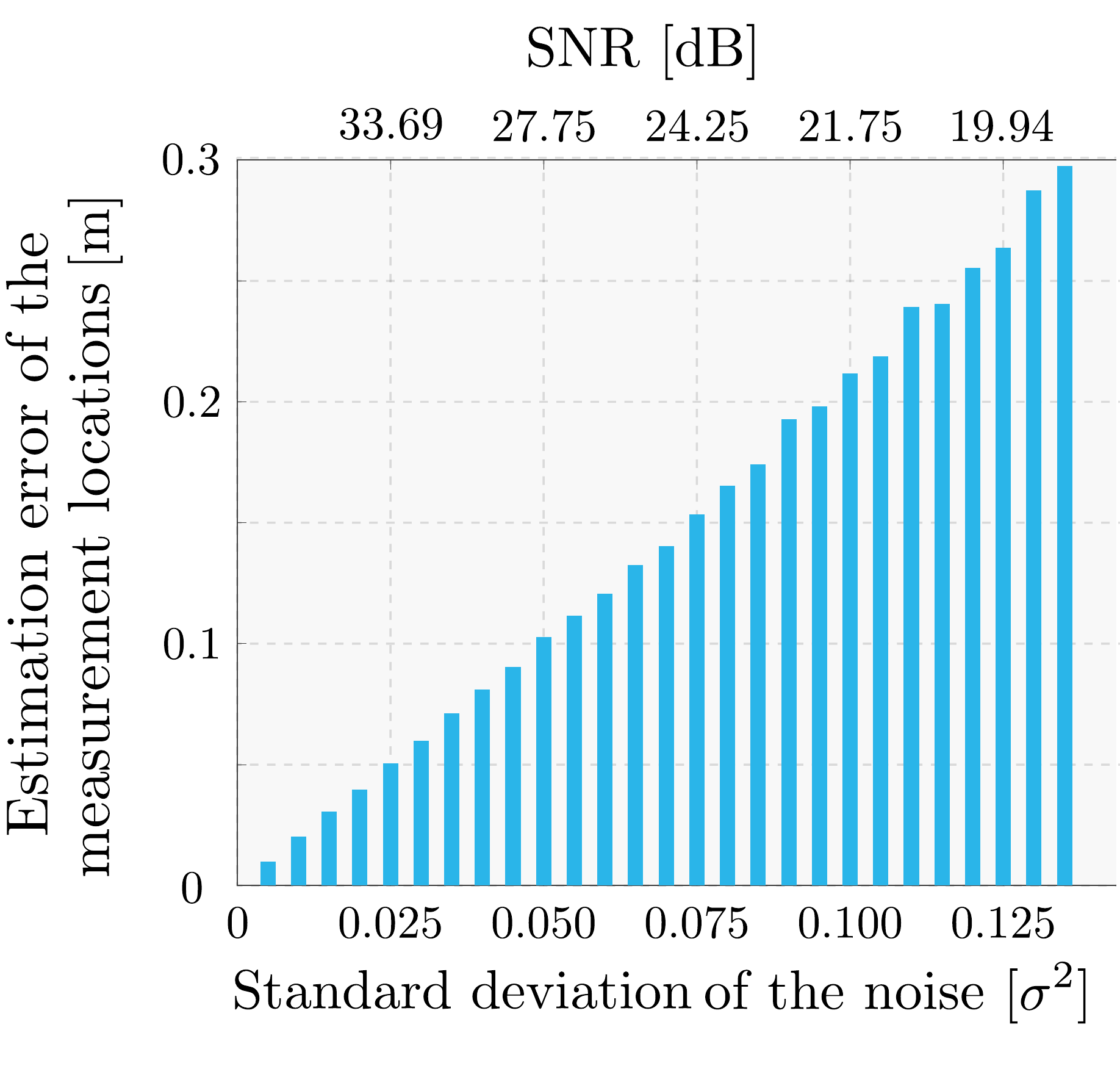}}
 \end{minipage}
  \vspace{-1em}
 \caption{\textit{Left:} Estimated measurement locations of 10 experiments are plotted in lighter shades, while the original measurement locations stand out as bordered circles. \textit{Right:} Dependence of the estimation error on the noise. The estimation error is defined as Euclidean distance between original and reconstructed measurement locations. }
   \vspace{-1em}
 \label{fig:noise_loc}
\end{figure}

\section{Conclusion}
\label{sec:conclusion}
We presented an algorithm for reconstructing the 2D geometry of a convex polyhedral room from a few first-order echoes. Our sensing setup is rudimentary---we assumed a single omnidirectional sound source and a single omnidirectional microphone collocated on the same device, and no preinstalled infrastructure in the room. We established conditions on room geometry and measurement locations under which the first-order echoes collected by a microphone define a room uniquely. Further, we stated our problem as a non-convex optimization problem and proposed a fast optimization tool which simultaneously estimates the geometry of a room and locations of the measurements. We empirically observe that the structure of our cost function admits efficient computation of the globally optimal solution. We showed through extensive numerical experiments that our method is robust to noise. 

Ongoing research includes the extension of the study to 3D and the verification of the method through experiments with real measured room impulse responses.

% References should be produced using the bibtex program from suitable
% BiBTeX files (here: strings, refs, manuals). The IEEEbib.bst bibliography
% style file from IEEE produces unsorted bibliography list.
% -------------------------------------------------------------------------
\bibliographystyle{IEEEbib}

\end{document}